\DeclareMathOperator*{\argmin}{argmin}
\newcommand{\indicator}[1]{\mathbb{I}_{\left\{#1\right\}}}
\def\inputset{{\bf Z}}
\def\distribution{D}
\def\proba{\mathbb{P}}
\def\inputspace{\mathcal{X}}
\def\targetspace{\mathcal{Y}}
\def\productspace{\mathcal{Z}}
\def\setS{\mathcal{S}}
\def\rkhs{\mathcal{H}}
\def\proba{\mathbb{P}}
\def\expectation{\mathbb{E}}
\def\realset{\mathbb{R}}
\DeclareMathOperator*{\union}{\cup}
\def\family{\mathcal{H}}
\def\loss{\ell}
\def\Loss{L}
\def\risk{R}
\def\identity{I}
\def\hypotheses{\mathcal{H}}
\def\mmin{m^*}
\def\betamin{\beta^*}
\def\Lq{\mathcal{L}_q}
\def\Lqemp{\hat{\mathcal{L}}_q}
\def\Lqi{\mathcal{L}_q^i}
\def\Lqiemp{\hat{\mathcal{L}}_q^i}
\def\Lqnoi{\mathcal{L}_q^{\backslash i}}
\def\Lqnoiemp{\hat{\mathcal{L}}_q^{\backslash i}}
\def\bfx{\boldsymbol{x}}
\def\bfz{\boldsymbol{z}}
\def\bfZ{\boldsymbol{Z}}
\def\bfz{\boldsymbol{z}}
\def\bfs{\boldsymbol{s}}
\def\bfX{\boldsymbol{X}}
\def\bfY{\boldsymbol{Y}}
\def\bfv{\boldsymbol{v}}
\def\bfy{\boldsymbol{y}}
\def\bfZ{\boldsymbol{Z}}
\def\bfpi{\boldsymbol{\pi}}
\def\bfdelta{\boldsymbol{\delta}}
\def\bflambda{\boldsymbol{\lambda}}
\newenvironment{equationsize*}[1]{%
  \skip@=\baselineskip 
  #1%
  \baselineskip=\skip@ 
  \equation
}{\nonumber\endequation \ignorespacesafterend} 
\newenvironment{alignsize*}[1]{%
  \skip@=\baselineskip 
  #1%
  \baselineskip=\skip@ 
  \start@align\@ne\st@rredtrue\m@ne
}{\endalign\ignorespacesafterend} 
\newcommand{\confusion}{{\cal C}}
\newcommand{\algo}{{\cal A}}
\newcommand{\dilation}{\mathfrak{D}}
\newcommand{\bfloss}{\boldsymbol{\loss}}
\newcommand{\bfone}{{\bf 1}}
\begin{document} 

\title{Confusion Matrix Stability Bounds for Multiclass Classification}
\author{Pierre Machart \and Liva Ralaivola}
\institute{QARMA, LIF UMR CNRS 7279\\
 Aix-Marseille Universit\'e\\
  39, rue F-Joliot Curie, F-13013 Marseille, France\\
\email{\{pierre.machart,liva.ralaivola\}@lif.univ-mrs.fr}}

\maketitle

\begin{abstract} 
We provide new theoretical results on the
generalization properties of learning algorithms for multiclass
classification problems. The originality of
our work is that we propose to use the {\em confusion matrix}  of a
classifier as a measure of its quality; our contribution is in the
line of work which attempts to set up and study the statistical
properties of new evaluation measures such as, e.g. ROC curves.
In the confusion-based learning framework we propose, we claim that a targetted
objective is to minimize the size of the confusion matrix $\confusion$,
measured through its {\em operator norm}
$\|\confusion\|$. We derive generalization bounds on the (size of the)
confusion matrix in an extended framework of uniform stability,
adapted to the case of matrix valued loss. Pivotal to our study is a
very recent matrix concentration inequality that generalizes
McDiarmid's inequality. As an illustration of the relevance of our
theoretical results, we show how two SVM learning procedures can be
proved to be confusion-friendly. To the best of our knowledge, the
present paper is the first that focuses on the confusion matrix from a
theoretical point of view.
\end{abstract} 

\section{Introduction}
\label{sec:introduction}
Multiclass classification is an important problem of machine
learning. The issue of having at hand statistically relevant procedures
to learn reliable predictors is of particular interest, given
the need of such predictors in information retrieval, web
mining, bioinformatics or neuroscience (one may for example think of document
categorization, gene classification, fMRI image classification).

Yet, the literature on multiclass learning is not as voluminous as
that of binary classification, while this multiclass prediction raises questions
from the algorithmic, theoretical and practical points of view. One of
the prominent questions is that of the measure to use in order to
assess the quality of a multiclass predictor. Here, we develop our
results with the idea that the {\em confusion matrix} is a performance
measure that deserves to be studied as it provides a finer information
on the properties of a classifier than the mere misclassification
rate. We do want to emphasize  that we provide theoretical results
on  the confusion matrix itself and that misclassification rate {\em
  is not} our primary concern ---as we shall see, though, getting
bounds on the confusion matrix entails, as a byproduct, bounds on the
misclassification rate.

Building on matrix-based
concentration inequalities~\cite{recht11simpler,tropp11a,gosh11who,rudelson07sampling,chaudhuri09multiview}, also
referred to as noncommutative concentration inequalities, we
establish a stability  framework for confusion-based learning
algorithm. In particular, we prove a generalization bound for
{\em confusion stable} learning algorithms and show that there exist
such algorithms in the literature.  In a sense, our framework and our results extend those of
\cite{bousquet02}, which are designed for scalar  loss functions. To
the best of our knowledge, this is the first work that establishes
generalization bounds based on confusion matrices.

The paper is organized as follows. Section~\ref{sec:setting} describes
the setting we are interested in and motivates the use of the
confusion matrix as a performance measure. Section~\ref{sec:results}
introduces the new notion of {\em stability} that will prove essential
to our study; the main theorem of this paper, together with its proof,
are provided. Section~\ref{sec:analysis} is devoted to
the analysis of two {\em SVM} procedures in the light of our new
framework. A discussion on the merits and possible extensions of our
approach concludes the paper (Section~\ref{sec:conclusion}).


\section{Confusion Loss}
\label{sec:setting}
\subsection{Notation}

As said earlier, we focus on the problem of multiclass classification.
The input space
is denoted by $\inputspace$ and the target space is
$$\targetspace=\{1,\ldots,Q\}.$$ The training sequence
$$\inputset=\{Z_i=(X_i,Y_i)\}_{i=1}^m$$ is made of $m$ identically and
independently random pairs $Z_i=(X_i,Y_i)$ distributed according to some unknown
(but fixed) distribution $\distribution$ over
$\productspace=\inputspace\times\targetspace$. The sequence of input data will be
referred to as ${\bfX}=\{X_i\}_{i=1}^m$ and the sequence of corresponding labels
${\bfY}=\{Y_i\}_{i=1}^m$, we may write $\inputset=\{\bfX,\bfY\}$.
The realization of $Z_i=(X_i,Y_i)$ is $z_i=(x_i,y_i)$  and 
$\bfz$, $\bfx$ and $\bfy$ refer to the realizations of the
corresponding sequences of random variables.
For a sequence $\bfy=\{y_1,\cdots,y_m\}$ of $m$ labels, $m_q(\bfy)$, or
simply $m_q$ when clear from context, denotes the number of 
labels from $\bfy$ that are equal to $q$; $\bfs(\bfy)$ it the binary
sequence $\{s_1(\bfy),\ldots,s_Q(\bfy)\}$ of size $Q$ such that $s_q(\bfy)=1$ 
if $q\in\bfy$ and $s_q(\bfy)=0$ otherwise.
 
 We will use
$D_{X|y}$  for the conditional distribution of $X$ given that $Y=y$;
therefore, for a given
sequence $\bfy=\{y_1,\ldots,y_m\}\in\targetspace^m$,
$D_{\bfX|\bfy}=\otimes_{i=1}^m\distribution_{X|y_i}$ is the distribution of
the random sample $\bfX=\{X_1,\ldots,X_m\}$ over $\inputspace^m$ such that
$X_i$ is distributed according to
$\distribution_{X|y_i}$; for $q\in\targetspace$, and $\bfX$
distributed according to $D_{\bfX|\bfy}$, $\bfX_q=\{X_{i_1},\ldots,X_{i_{m_q}}\}$ denotes the
random sequence of variables such that $X_{i_k}$ is distributed
according to $\distribution_{X|q}$.
$\expectation[\cdot]$ and $\expectation_{X|y}[\cdot]$ denote the
expectations with respect to $\distribution$ and
$\distribution_{X|y}$, respectively.

For a training sequence $\inputset$, $\inputset^i$ denotes the
sequence $$\inputset^i=\{Z_1,\ldots Z_{i-1},Z_i',Z_{i+1},\ldots,Z_m\}$$ where $Z_i'$
is distributed as $Z_i$; $\inputset^{\backslash i}$ is the sequence
$$\inputset^{\backslash i}=\{Z_1,\ldots
Z_{i-1},Z_{i+1},\ldots,Z_m\}.$$ These definitions
directly carry over when conditioned on a sequence of labels $\bfy$ (with,
henceforth, $y_i'=y_i$).

We will consider a family $\family$ of predictors such that 
$$\family\subseteq\left\{h:h(x)\in\realset^{Q},\;\forall x\in\inputspace\right\}.$$
For $h\in\family$, $h_q\in\realset^{\inputspace}$ denotes its $q$th
coordinate. Also, $$\bfloss=(\loss_q)_{1\leq q\leq Q}$$ is a set of loss
functions such that: $$\loss_q:\family\times\inputspace\times\targetspace\rightarrow\realset_+.$$
Finally, for a given algorithm $\algo:\union_{m=1}^{\infty}\productspace^m\rightarrow\hypotheses$, $\algo_\inputset$ will denote the
hypothesis learned by $\algo$ when trained on $\inputset$.

\subsection{Confusion Matrix versus Misclassification Rate}
We here provide a discussion as to why minding the {\em confusion
  matrix} or {\em confusion loss} (terms that we will use
interchangeably) is
crucial in multiclass classification. We also introduce the reason why
we may see the confusion matrix as an operator and, therefore,
motivate the recourse to the {\em operator norm} to measure the `size' of
the confusion matrix.

In many situations, e.g. class-imbalanced datasets, it is important
not to measure the quality of a predictor $h$ on its classification error
$\proba_{XY}( h(X)\neq Y)$ only, as this may lead to erroneous
conclusions regarding the quality of $h$. Indeed, if, for instance,
some class $q$ is predominantly present in the data at hand, say
$\proba(Y=q)=1-\varepsilon$, for some small $\varepsilon>0$,  then the predictor $ h_{\text{maj}}$ that
always outputs $ h_{\text{maj}}(x)=q$ regardless of $x$ has a
classification error lower than $\varepsilon$. Yet, it might be important
not to classify an instance of some class $p$ in class $q$:
take the example of classifying mushrooms according to
the categories $\{\text{\tt hallucinogen},\text{\tt poisonous},
\text{\tt innocuous}\}$, it might not be benign to predict {\tt innocuous}
(the majority class) instead of {\tt hallucinogen} or {\tt poisonous}. The framework we consider allows us,
among other things, to be
immune to situations where class-imbalance may occur.

We do claim that a more relevant object to consider is the
{\em confusion matrix} which, given a binary sequence
$\bfs=\{s_1\cdots s_Q\}\in\{0,1\}^Q$, is defined as
$$\confusion_{\bfs}(h):=\sum_{q:s_q=1}\expectation_{X|q}\Loss(h,X,q),$$
where, given an hypothesis $h\in\family$, $x\in\inputspace$, $y\in\targetspace$,
$L(h,x,y)=(l_{ij})_{1\leq i,j\leq Q}\in\realset^{Q\times Q}$ is the \emph{loss matrix} such that:
\begin{align*}
 l_{ij} := \left\{ \begin{array}{ll}
					 \loss_j(h,x,y) &\text{if } i=y
                                         \text{ and } i\neq j\\
                                         0 & \text{otherwise.}
                                        \end{array}\right.
\end{align*}
Note that this matrix has at most one nonzero row, namely its $i$th row.
 
For a sequence
$\bfy\in\targetspace^m$ of $m$ labels and a random sequence
$\bfX$ distributed according to $\distribution_{\bfX|\bfy}$, the
conditional empirical confusion matrix
$\widehat{\confusion}_{\bfy}(h,\bfX)$ is 
\begin{equation*}
\widehat{\confusion}_{\bfy}(h,\bfX):=\sum_{i=1}^m\frac{1}{m_{y_i}}\Loss(h,X_i,y_i)=\sum_{q\in\bfy}\frac{1}{m_q}\sum_{i:y_i=q}\Loss(h,X,q)=\sum_{q\in\bfy}\Loss_q(h,\bfX,\bfy),
\end{equation*}
where
\begin{equation*}
\Loss_q(h,\bfX,\bfy):=\frac{1}{m_q}\sum_{i:y_i=q}\Loss(h,X_i,q).
\end{equation*}
For a random sequence $\bfZ=\{\bfX,\bfY\}$ distributed according to
$\distribution^m$, the (unconditional) empirical confusion matrix is given by
 $$\expectation_{\bfX|\bfY}\widehat{\confusion}_{\bfY}(h,\bfX)=\confusion_{\bfs(\bfY)}(h),$$ 
which is a random variable, as it depends on the random sequence
$\bfY$. For exposition purposes it will often be more convenient to
consider a fixed sequence $\bfy$ of labels and state results on
$\widehat{\confusion}_{\bfy}(h,\bfX)$, noting that
$$\expectation_{\bfX|\bfy}\widehat{\confusion}_{\bfy}(h,\bfX)=\confusion_{\bfs(\bfy)}(h).$$

The slight differences between our definitions of (conditional)
confusion matrices and the usual definition of a confusion matrix 
is that the diagonal elements are all zero and that
they can accomodate any family of loss functions (and not just the
$0$-$1$ loss).

A natural objective that may be pursued in multiclass classification
is to learn a classifier $h$ with `small' confusion matrix, where
`small' might be defined with respect to (some) matrix norm of
$\confusion_{\bfs}( h)$. The norm that we retain is
the {\em operator
norm} that we denote $\|\cdot\|$ from now on: recall that, for a matrix $M$,
$\|M\|$ is computed as
$$\|M\|=\max_{\bfv\neq {\bf 0}}\frac{\|M\bfv\|_2}{\|\bfv\|_2},$$
where $\|\cdot\|_2$ is the Euclidean norm; $\|M\|$ is merely the
largest singular value of $M$ ---note that $\|M^{\top}\|=\|M\|.$

Not only is the operator norm a `natural' norm on matrices but an
important reason  for working with it is that
$\confusion_{\bfs}(h)$ is often precisely used as an {\em operator} acting
on the vector of prior distributions
$$\bfpi=[\proba(Y=1)\cdots\proba(Y=Q)]^{\top}.$$
Indeed, a quantity of interest is for instance the $\bfloss$-{\em risk}
$\risk_{\bfloss}( h)$ of $ h$, with
\begin{align*}
\risk_{\bfloss}( h)
&:=\expectation_{XY}\left\{\sum_{q=1}^Q\loss_{q}( h,X,Y)\right\}=\expectation_Y\left\{\sum_{q=1}^Q\expectation_{X|Y}\loss_{q}( h,X,Y)\right\}\\
&=\sum_{p,q=1}^Q\expectation_{X|p}\loss_{q}( h,X,p)\pi_p=\|\bfpi^{\top}\confusion_{\bfone}( h)\|_1.
\end{align*}
It is interesting to observe that, $\forall h,\forall\bfpi\in\Lambda:=\{\bflambda\in\realset^Q:\lambda_q\geq 0,\, \sum_q\lambda_q=1\}$:
\begin{align*}
0\leq\risk_{\bfloss}( h)&=\|\bfpi\confusion_{\bfone}(h)\|_1=\bfpi^{\top}\confusion_{\bfone}( h)
\bfone\\
&\leq\sqrt{Q}\left\|\bfpi^{\top}\confusion_{\bfone}( h)\right\|_2 =\sqrt{Q}\left\|\confusion_{\bfone}^{\top}( h) \bfpi \right\|_2\\
& \leq\sqrt{Q}\left\|\confusion_{\bfone}^{\top}( h) \right\|\| \bfpi\|_2 \leq \sqrt{Q}\left\|\confusion_{\bfone}^{\top}( h) \right\|=\sqrt{Q}\left\|\confusion_{\bfone}( h)\right\|,
\end{align*}
 where we have used Cauchy-Schwarz inequalty in the second line, the
definition of the operator norm on the third line and the fact that
$\|\bfpi\|_2\leq 1$ for any $\bfpi$ in $\Lambda$;
$\bfone$ is the $Q$-dimensional vector where each entry is $1$. 
Recollecting things, we just established the following proposition.
\begin{proposition}
\label{prop:errorandconfusion}
$\forall h\in\family,\;\risk_{\bfloss}(h)=\|\bfpi^{\top}\confusion_{\bfone}( h)\|_1\leq\sqrt{Q}\left\|\confusion_{\bfone}(h)\right\|.$
\end{proposition}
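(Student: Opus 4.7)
The plan is to establish the identity $\risk_{\bfloss}(h)=\|\bfpi^{\top}\confusion_{\bfone}(h)\|_1$ first, and then bound that quantity by $\sqrt{Q}\|\confusion_{\bfone}(h)\|$ via a short chain of classical norm inequalities. Since all entries of $\bfpi$ and of $\confusion_{\bfone}(h)$ are nonnegative (the losses $\loss_q$ are $\realset_+$-valued and the priors are probabilities), the vector $\bfpi^{\top}\confusion_{\bfone}(h)$ is coordinate-wise nonnegative, so its $\ell_1$ norm is simply the sum of its entries, i.e.\ $\bfpi^{\top}\confusion_{\bfone}(h)\bfone$. This makes the equality transparent.

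First I would expand $\risk_{\bfloss}(h)=\expectation_{XY}\sum_{q=1}^Q\loss_q(h,X,Y)$ by conditioning on $Y$ and then on $Y=p$, producing a double sum $\sum_{p,q}\pi_p\expectation_{X|p}\loss_q(h,X,p)$, which is precisely the bilinear form $\bfpi^{\top}\confusion_{\bfone}(h)\bfone$. Combined with the nonnegativity remark above, this gives the first equality of the proposition.

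Next, for the inequality, the strategy is a three-step bound. (i) Use Cauchy--Schwarz, or equivalently the standard norm comparison $\|\bfu\|_1\leq\sqrt{Q}\|\bfu\|_2$ valid for any $\bfu\in\realset^Q$, applied to $\bfu=\bfpi^{\top}\confusion_{\bfone}(h)$; this yields the $\sqrt{Q}$ factor. (ii) Invoke the defining property of the operator norm, rewriting $\|\bfpi^{\top}\confusion_{\bfone}(h)\|_2=\|\confusion_{\bfone}^{\top}(h)\bfpi\|_2$ and bounding it by $\|\confusion_{\bfone}^{\top}(h)\|\,\|\bfpi\|_2$. (iii) Note $\|\bfpi\|_2\leq\|\bfpi\|_1=1$ because $\bfpi\in\Lambda$, and use $\|\confusion_{\bfone}^{\top}(h)\|=\|\confusion_{\bfone}(h)\|$ (singular values are invariant under transposition).

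There is really no hard step here: the statement is a direct consequence of elementary norm inequalities once the risk has been recognised as $\bfpi^{\top}\confusion_{\bfone}(h)\bfone$. The only mild subtlety worth double-checking is the passage from $\|\cdot\|_1$ to $\bfpi^{\top}\confusion_{\bfone}(h)\bfone$, which relies on the nonnegativity of the entries; this holds by construction of $\Loss$ and of $\confusion_{\bfs}$. Everything else is a one-line invocation of Cauchy--Schwarz and the submultiplicativity of the operator norm.
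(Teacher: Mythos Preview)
Your proposal is correct and follows essentially the same argument as the paper: expand $\risk_{\bfloss}(h)$ by conditioning on $Y$ to obtain $\sum_{p,q}\pi_p\expectation_{X|p}\loss_q(h,X,p)=\bfpi^{\top}\confusion_{\bfone}(h)\bfone$, identify this with $\|\bfpi^{\top}\confusion_{\bfone}(h)\|_1$ via nonnegativity, then chain $\|\cdot\|_1\leq\sqrt{Q}\|\cdot\|_2$, the operator-norm bound $\|\confusion_{\bfone}^{\top}(h)\bfpi\|_2\leq\|\confusion_{\bfone}^{\top}(h)\|\|\bfpi\|_2$, $\|\bfpi\|_2\leq 1$, and $\|M^{\top}\|=\|M\|$. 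Your explicit remark on why the $\ell_1$ norm equals the plain sum (nonnegativity of all entries) is a small clarification the paper leaves implicit, but otherwise the two proofs coincide step for step.
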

This precisely says that the operator norm of the confusion matrix (according
to our definition) provides a bound on the risk. As a consequence,
bounding $\|\confusion_{\bfone}(h)\|$ is a relevant way to bound the
risk in a way that is independent from the class priors (since the
$\confusion_{\bfone}(h)$ is independent form these prior
distributions as well). This is essential in class-imbalanced problems
and also critical if sampling (prior) distributions are different for
training and test data.

Again, we would like to insist on the fact that the confusion matrix {\em is
the subject of our study} for its ability to provide fine-grain
information on the prediction errors made by classifiers; as mentioned
in the introduction, there are
application domains where confusion
matrices {\em indeed are} the measure of performance that is looked at. If
needed, the norm of
the confusion matrix allows us to summarize the characteristics of the
classifiers in one scalar value (the larger, the worse), and it
provides, as a (beneficial) ``side effect'', a bound on 
$\risk_{\bfloss}(h)$.


\section{Deriving Stability Bounds on the Confusion Matrix}
\label{sec:results}
One of the most prominent issues in \emph{learning theory} is to estimate the real performance of a learning system.
The usual approach consists in studying how empirical measures converge to their expectation.
In the traditional settings, it often boils down to providing bounds describing how the empirical risk relates to the expected one.
In this work, we show that one can use similar techniques to provide bounds on (the operator norm of) the confusion loss.

\subsection{Stability}
Following the early work of \cite{vapnik82}, the risk has traditionally been estimated through its empirical measure and a measure of the complexity of the
hypothesis class such as the Vapnik-Chervonenkis dimension, the
fat-shattering dimension or the Rademacher complexity.
During the last decade, a new and successful approach based on
\emph{algorithmic stability} to provide some new bounds has emerged.
One of the highlights of this approach is the focus on properties of
the learning algorithm at hand, instead of the richness of hypothesis class.
In essence, algorithmic stability results aim at taking  advantage from the way a given
algorithm actually explores the hypothesis space, which may lead to
tight bounds. The main results of \cite{bousquet02} were obtained using the definition of \emph{uniform stability}.
\begin{definition}[Uniform stability \cite{bousquet02}]
 An algorithm $\algo$ has \emph{uniform stability} $\beta$ with respect to loss function $\ell$ if the following holds:
\begin{align*}
 \forall \inputset\in\productspace^m, \forall i \in \{1,\dots,m\}, \|\loss(\algo_\inputset,.) - \loss(\algo_{\inputset^{\backslash i}},.)\|_{\infty} \leq \beta.
\end{align*}
\end{definition}

In the present paper, we now focus on the generalization of
stability-based results to confusion loss.
We introduce the definition of \emph{confusion stability}.
\begin{definition}[Confusion stability]
\label{defconfstab}
 An algorithm $\algo$ is \emph{confusion stable} with respect to the
 set of loss functions  $\bfloss$ {\em if}
 there exists a constant $B>0$ such that
$\forall i \in \{1,\ldots,m\},\forall\bfz\in\productspace^m$, whenever
$m_q\geq 2, \forall q\in\targetspace,$
$$\sup_{x\in\inputspace}\left\|\Loss(\algo_{\bfz},x,y_i)-\Loss(\algo_{\bfz^{\backslash
      i}}, x, y_i)\right\|\leq \frac{B}{m_{y_i}}.$$

From here on, $q^*$, $m^*$ and $\beta^*$ will  stand for
$$q^*:=\argmin_{q}m_q,\; m^*:=m_{q^*}, \text{ and } \beta^*:=B/m^*.$$
\end{definition}

\subsection{Noncommutative McDiarmid's Bounded Difference Inequality}
Centaral to the resulst of \cite{bousquet02} is a variation of Azuma's concentration
inequality, due to \cite{mcdiarmid89}.
It describes how a scalar function of independent random variables (the elements of our training set) concentrates
around its mean, given how changing one of the random variables impacts the value of the function.

Recently there has been an extension of McDiarmid's inequality to the
matrix setting \cite{tropp11a} .
For the sake of self-containedness, we recall this noncommutative bound.
\begin{theorem}[Matrix bounded difference (\cite{tropp11a}, corollary 7.5)]
\label{McD}
Let $H$ be a function that maps $m$ variables
from some space $\productspace$ to
a self-adjoint matrix of dimension $2Q$. Consider a sequence $\{A_i\}$ of fixed self-adjoint matrices that satisfy
\begin{align}
\label{McDcond}
\left( H(z_1,\ldots,z_i,\ldots,z_m) -
  H(z_1,\ldots,z_i',\ldots,z_m) \right)^2 \preccurlyeq A_i^2, 
\end{align}
for $z_i,z_i'\in\productspace$ and for $i=1,\ldots,m$, where
$\preccurlyeq$ is the (partial) order on self-adjoint matrices.
Then,  if
$\inputset$ is a random sequence of independent variables over
$\productspace$:
\begin{align*}
\forall t\geq 0,\; \mathbb{P}\left\{\left\|H(\inputset)-\mathbb{E}_{\inputset}H(\inputset)\right\|\geq t\right\} \leq 2Q e^{-t^2/8\sigma^2},
\end{align*}
where $\sigma^2 := \|\sum_i A_i^2\|$.
\end{theorem}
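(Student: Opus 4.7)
The plan is to follow the now-standard blueprint for noncommutative McDiarmid-type inequalities: a Doob martingale decomposition of $H(\inputset)-\expectation H(\inputset)$ into self-adjoint increments, the matrix Laplace transform method to reduce the operator-norm tail to an exponential trace moment, and an iterated application of Lieb's concavity theorem to control that trace moment in the noncommutative setting.

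Concretely, I would introduce $F_i := \expectation[H(\inputset)\mid Z_1,\ldots,Z_i]$, so that $F_0 = \expectation H(\inputset)$, $F_m = H(\inputset)$, and the differences $D_i := F_i - F_{i-1}$ form a self-adjoint matrix-martingale sequence whose sum equals $H(\inputset)-\expectation H(\inputset)$. The bounded-difference hypothesis (\ref{McDcond}) would then be converted, by introducing an independent copy $Z_i'$ and writing $D_i$ up to a sign as $\tfrac12 \expectation[H(\ldots,Z_i,\ldots)-H(\ldots,Z_i',\ldots)\mid Z_1,\ldots,Z_i]$, into a conditional L\"owner-variance bound of the form $\expectation[D_i^2\mid Z_1,\ldots,Z_{i-1}] \preccurlyeq A_i^2$. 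Next, applying the matrix Markov inequality to $\lambda_{\max}(\pm\theta(H-\expectation H))$ yields
\begin{align*}
\proba\{\|H(\inputset)-\expectation H(\inputset)\|\geq t\} \leq 2\inf_{\theta>0} e^{-\theta t}\,\expectation\,\mathrm{tr}\exp\!\Bigl(\theta\sum_{i=1}^m D_i\Bigr),
\end{align*}
where the outer factor $2$ absorbs the two tails and the dimensional prefactor $2Q$ in the final statement will arise from $\mathrm{tr}\exp(M) \leq 2Q\,e^{\lambda_{\max}(M)}$ applied to the $2Q\times 2Q$ self-adjoint matrix $\theta\sum_i D_i$. Iterating Lieb's concavity theorem inside the trace exponential then peels off the conditional expectations one at a time and yields a sub-Gaussian estimate $\expectation\,\mathrm{tr}\exp(\theta\sum_i D_i) \leq 2Q\exp(c\,\theta^2\sigma^2)$ for an absolute constant $c>0$; optimizing $\theta$ produces the stated Gaussian tail $2Q\,e^{-t^2/8\sigma^2}$.

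The main obstacle is this last step. In the scalar case, the tower property together with $e^{a+b} = e^a e^b$ lets one peel off the terms in $\expectation\,e^{\theta\sum_i D_i}$ one by one using conditional MGF bounds; for non-commuting matrices $e^{A+B}\neq e^A e^B$, so this simple unwinding fails. Lieb's concavity theorem---the concavity of $A\mapsto \mathrm{tr}\exp(L+\log A)$---is precisely what licenses the recursive unwinding inside $\mathrm{tr}\exp$, at the cost of the dimensional prefactor $2Q$. The other subtle point, namely converting the pointwise hypothesis (\ref{McDcond}) into a L\"owner bound on $\expectation[D_i^2\mid\cdot]$, is delicate because squaring is not monotone in the L\"owner order; it is handled cleanly by the independent-copy symmetrization described above, without which the scalar-style argument would not carry over to matrices.
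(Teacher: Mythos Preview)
The paper does not prove this theorem at all: it is quoted verbatim from \cite{tropp11a} (Corollary~7.5) ``for the sake of self-containedness'' and then used as a black box in the proof of Theorem~\ref{th:confusionbound}. There is therefore no proof in the paper to compare your proposal against.

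That said, your outline is essentially Tropp's own argument: Doob martingale decomposition into self-adjoint increments, matrix Laplace transform to pass from operator-norm tails to a trace exponential, and iterated use of Lieb's concavity to unwind the conditional expectations. Two small inaccuracies in your sketch are worth flagging. First, the representation of $D_i$ via an independent copy $Z_i'$ does not carry a factor $\tfrac12$: one has $D_i = \expectation\!\left[H(\ldots,Z_i,\ldots)-H(\ldots,Z_i',\ldots)\,\middle|\,Z_1,\ldots,Z_i\right]$ directly, with the expectation taken over $Z_i'$ and the future coordinates. Second, your bookkeeping of the prefactor is slightly off: the trace bound $\mathrm{tr}\,e^{M}\leq d\,e^{\lambda_{\max}(M)}$ contributes the ambient dimension $d=2Q$, and controlling $\|\cdot\|$ rather than $\lambda_{\max}$ alone would normally cost an additional factor~$2$; in the application here the matrices are dilations, whose spectra are symmetric about zero, so the one-sided $\lambda_{\max}$ bound already controls the norm and the stated constant $2Q$ is correct. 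Neither point affects the overall strategy, which is sound.
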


The confusion matrices we deal with are not necessarily self-adjoint,
as is required by the theorem. To make use of the 
theorem, we rely on the dilation $\mathfrak{D}(A)$ of  $A$, with
\begin{align*}\mathfrak{D}(A):=
 \begin{pmatrix}
  0 & A\\
 A^{*} & 0
 \end{pmatrix},
\end{align*}
where $A^{*}$ is the adjoint of $A$ (note that $\mathfrak{D}(A)$ is
self-adjoint) and on the result (see~\cite{tropp11a})
$$\|\mathfrak{D}(A)\|=\|A\|.$$

\subsection{Stability Bound}

The following theorem is the main result of the paper. It says that
the empirical confusion is close to the expected confusion whenever
the learning algorithm at hand exhibits confusion-stability
properties. This is a new flavor of the results of \cite{bousquet02}
for the case of matrix-based loss.

\begin{theorem}[Confusion bound]
\label{th:confusionbound}
 Let $\algo$ be a learning algorithm. Assume that all the loss
 functions under consideration take values in the range $[0;M]$. Let
 $\bfy\in\targetspace^m$ be a fixed sequence of labels.

If $\algo$ is a confusion stable as defined in Definition~\ref{defconfstab},
then, $\forall m\geq1,\;\forall\delta \in (0,1)$, the following holds,
with prob. $1-\delta$ over the random draw of $\bfX\sim\distribution_{\bfX|\bfy}$,
\begin{align*}
\left\|\widehat{\confusion}_{\bfy}(\algo,\bfX)-\confusion_{\bfs(\bfy)}(\algo)\right\| \leq 2B \sum_{q} \frac{1}{m_q} + Q \sqrt{8 \ln\left(\frac{Q^2}{\delta}\right)} \left(4 \sqrt{\mmin} \betamin + M \sqrt{\frac{Q}{\mmin}}\right).
\end{align*}
As a consequence, with probability $1-\delta$ over the random draw of $\inputset\sim\distribution^m$,
\begin{align*}
&\left\|\widehat{\confusion}_{\bfY}(\algo,\bfX)-\confusion_{\bfs(\bfY)}(\algo)\right\| \leq 2B \sum_{q} \frac{1}{m_q}+ Q \sqrt{8 \ln\left(\frac{Q^2}{\delta}\right)} \left(4 \sqrt{\mmin} \betamin + M \sqrt{\frac{Q}{\mmin}}\right).
\end{align*}
\end{theorem}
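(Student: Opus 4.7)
The plan is to import the Bousquet--Elisseeff template into the matrix/operator-norm setting, combining the dilation $\dilation(\cdot)$ introduced in Section~3.2 with the noncommutative bounded-difference inequality of Theorem~\ref{McD} in place of the scalar McDiarmid. Set
\[
G(\bfX):=\widehat{\confusion}_{\bfy}(\algo_{\bfZ},\bfX)-\confusion_{\bfs(\bfy)}(\algo_{\bfZ}),
\]
so the quantity to bound is $\|G(\bfX)\|$, and split via the triangle inequality
\[
\|G(\bfX)\|\;\leq\;\|\expectation_{\bfX|\bfy}G(\bfX)\|+\|G(\bfX)-\expectation_{\bfX|\bfy}G(\bfX)\|,
\]
handling the ``bias'' and ``fluctuation'' terms separately.

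For the bias, I would use the classical renaming trick: in each $i$-th summand of $\expectation\widehat{\confusion}_{\bfy}(\algo_{\bfZ},\bfX)$ the integration variable $X_i$ can be swapped with an independent copy, reducing the comparison to one of $\algo_{\bfZ}$ and $\algo_{\bfZ^{i}}$ evaluated at the same point of class $y_i$. Routing this through $\algo_{\bfZ^{\backslash i}}$ with the triangle inequality and invoking confusion stability yields a per-index operator-norm bound of order $B/m_{y_i}^{2}$; summing over $i$ and regrouping by class label gives exactly the $2B\sum_{q}1/m_{q}$ contribution.

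For the fluctuation, I would apply Theorem~\ref{McD} to $H(\bfX):=\dilation(G(\bfX))$, which is self-adjoint of dimension $2Q$ with $\|H\|=\|G\|$. The sensitivity of $H$ to replacing $X_i$ by an independent copy decomposes into (a) the change due to the training hypothesis switching $\algo_{\bfz}\to\algo_{\bfz^{i}}$, controlled by confusion stability (again via $\bfz^{\backslash i}$) and yielding a term of order $\beta^{\ast}$, plus (b) the change due to the evaluation sample shifting $x_i\to x_i'$, bounded by $2M/m_{y_i}$ from the $[0,M]$ range of the loss. Choosing each $A_i$ as the appropriate scalar multiple of the identity makes condition~\eqref{McDcond} automatic, and summing the scalar coefficients while regrouping indices by class label yields $\sigma^{2}$ of order $m^{\ast}(\beta^{\ast})^{2}+QM^{2}/m^{\ast}$ up to constants; inverting the tail of Theorem~\ref{McD} then produces the claimed $Q\sqrt{8\ln(Q^{2}/\delta)}\bigl(4\sqrt{m^{\ast}}\beta^{\ast}+M\sqrt{Q/m^{\ast}}\bigr)$ deviation. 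The unconditional version follows by integrating the conditional bound over $\bfY$, since the right-hand side depends on $\bfY$ only through the $m_q$ and the event $\min_q m_q\geq 2$ is already required by Definition~\ref{defconfstab}.

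The main obstacle I anticipate is the sensitivity analysis of part~(a): the loss matrix $\Loss(\cdot,\cdot,y)$ has at most one nonzero row, whereas the stability hypothesis only controls the $y_i$-th row of $\Loss(\algo_{\bfz},\cdot,y_i)-\Loss(\algo_{\bfz^{\backslash i}},\cdot,y_i)$, while $\widehat{\confusion}_{\bfy}$ aggregates rows indexed by every label occurring in $\bfy$. Reconciling these scales---by exploiting the row-sparsity of the loss matrix so that summands with different $y_j$ live in disjoint row subspaces and can be separated cleanly in operator norm, and by carefully tracking the factor-of-two generated by the $\bfz\to\bfz^{\backslash i}\to\bfz^{i}$ triangle step---is what ultimately fixes the explicit numerical constants and the $Q$, $\sqrt{Q}$ factors appearing in the final expression.
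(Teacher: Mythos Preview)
Your overall plan---bias/fluctuation split, renaming trick for the bias, matrix McDiarmid for the fluctuation---is the right template, but it differs from the paper's proof in one structural choice, and that choice is precisely where a genuine gap appears.

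The paper does \emph{not} apply Theorem~\ref{McD} to the whole matrix $H(\bfX)=\dilation(G(\bfX))$. Instead it first uses the triangle inequality to split
\[
\|\widehat{\confusion}_{\bfy}-\confusion_{\bfs(\bfy)}\|\leq\sum_{q\in\bfy}\|\Lq-\Lqemp\|,
\]
and then, for each $q$ separately, fixes the variables $X_k$ with $y_k\neq q$ and applies Theorem~\ref{McD} to $H_q:=\dilation(\Lq-\Lqemp)$ as a function of only the $m_q$ variables in class $q$. A union bound over the $Q$ classes then reassembles the pieces. This per-row decomposition is what makes the proof go through under Definition~\ref{defconfstab}: when you perturb $X_i$ with $y_i=q$, you only have to control the change in the $q$-th row of the confusion matrix, which is exactly what the stability hypothesis provides.

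Your whole-matrix approach, by contrast, requires bounding $\|G(\bfX)-G(\bfX^i)\|$ for every $i$. Replacing $X_i$ (with $y_i=p$, say) changes the trained hypothesis from $\algo_{\bfZ}$ to $\algo_{\bfZ^i}$, and this perturbs \emph{every} summand $\Loss(\algo_{\bfZ},X_j,y_j)$ in $\widehat{\confusion}_{\bfy}$, including those with $y_j\neq p$. To bound those contributions you would need control over $\|\Loss(\algo_{\bfz},x,q)-\Loss(\algo_{\bfz^{\backslash i}},x,q)\|$ for $q\neq y_i$, but Definition~\ref{defconfstab} only guarantees the bound for $q=y_i$. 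The row-sparsity argument you sketch does not rescue this: it is true that summands with different $y_j$ occupy disjoint rows, but that merely lets you compute the operator norm of the aggregate as (at worst) a Euclidean combination of the per-row norms---it does not supply the missing per-row bounds for $q\neq y_i$. You anticipated this obstacle in your last paragraph, but the resolution you propose does not close it. The paper's per-row decomposition is exactly the device that sidesteps the issue, at the (modest) cost of a union bound and the extra factor of $Q$ appearing in front of the deviation term.
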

\begin{proof}[Sketch] The complete proof can be found in
  the next subsection. We here provide the skeleton of the proof.
  We proceed in 3 steps to get the first bound. 

\begin{enumerate}
\item {\bf Triangle inequality.} To
  start with, we know by the triangle inequality
\begin{align}
\|\widehat{\confusion}_{\bfy}(\algo,\bfX)-\confusion_{\bfs(\bfy)}(\algo)\|\notag &=\left\|\sum_{q\in\bfy}(\Loss_q(\algo_{\bfZ},\bfZ)-\expectation_{\bfX}\Loss_q(\algo_{\bfZ},\bfZ))\right\|\notag\\
&\leq\sum_{q\in\bfy}\left\|\Loss_q(\algo_{\bfZ},\bfZ)-\expectation_{\bfX}\Loss_q(\algo_{\bfZ},\bfZ)\right\|.\label{eq:triangleineq}
\end{align}
Using uniform stability arguments, we bound each summand
with probability $1-\delta/Q$.
\item {\bf Union Bound.} Then, using the union bound we get a bound on
$\|\widehat{\confusion}(\algo,\bfX)-\confusion_{\bfs(\bfy)}(\algo)\|$
that holds with probability at least $1-\delta$.
\item {\bf Wrap up.} Finally, recoursing to a simple argument, we express the obtained
bound solely with respect to $\mmin$.
\end{enumerate}
Among the three steps, the first one is the more involved and much
part of the proof is devoted to address it.

To get the bound with the unconditional confusion matrix
$\confusion_{\bfs(\bfY)}(\algo)$ it suffices to observe that for any
event ${\mathcal{E}(\bfX,\bfY)}$ that depends on $\bfX$ and $\bfY$,
such that for all sequences $\bfy$,
$\proba_{\bfX|\bfy}\{\mathcal{E}(\bfX,\bfy)\}\leq \delta$, the
following holds:
\begin{align*}
\proba_{XY}(\mathcal{E}(\bfX,\bfY))&=\expectation_{XY}\left\{\indicator{\mathcal{E}(\bfX,\bfY)}\right\}
=\expectation_{\bfY}\left\{\expectation_{\bfX|\bfY}\indicator{\mathcal{E}(\bfX,\bfY)}\right\}\\
&=\sum_{\bfy}\expectation_{\bfX|\bfY}\indicator{\mathcal{E}(\bfX,\bfY)}\proba_{\bfY}(\bfY=\bfy)
=\sum_{\bfy}\proba_{\bfX|\bfy}\{\mathcal{E}(\bfX,\bfy)\}\proba_{\bfY}(\bfY=\bfy)\\
&\leq \sum_{\bfy}\delta \proba_{\bfY}(\bfY=\bfy)=\delta,
\end{align*}
which gives the desired result.
\end{proof}\qed

\begin{remark}
If needed, it is straightforward to  bound 
$\|\confusion_{\bfs(\bfy)}(\algo)\|$ and
$\|\confusion_{\bfs(\bfY)}(\algo)\|$ by using the triangle inequality
$|\|A\|-\|B\||\leq\|A-B\|$ on the stated bounds. 
\end{remark}

\begin{remark}
A few comments may help understand the meaning of our main theorem.
First,  it is expected to get a bound expressed in terms of
$1/\sqrt{\mmin}$, since a) $1/\sqrt{m}$ is a typical rate encountered
in bounds based on $m$ data and b) the bound cannot be better than a
bound devoted to the least informed class (that would be in
$1/\sqrt{\mmin}$) ---resampling procedures may be a strategy to
consider to overcome this limit.
Second, this theorem says that it is a relevant idea to try and
minimize the empirical confusion matrix of a multiclass predictor
provided the algorithm used is stable ---as will be the case of the
algorithms analyzed in the following section. Designing algorithm that
minimize the norm of the confusion matrix is therefore an enticing challenge.
Finally, when $Q=2$, that is we are in a binary classification
framework, Theorem~\ref{th:confusionbound} gives a bound on the
maximum of the false-positive rate and the false-negative rate, since
this the operator norm of the confusion matrix precisely corresponds
to this maximum value.

\end{remark}
\subsection{Proof of Theorem~\ref{th:confusionbound}}
To ease the readability, we introduce  additional notation:
\begin{align*}
\Lq&:=\expectation_{X|q}{\Loss}(\algo_{\inputset},X,q), \quad\Lqemp:={\Loss_q}(\algo_{\inputset},\bfX,\bfy),\\
\Lqi&:=\expectation_{X|q}{\Loss}(\algo_{\inputset^i},X,q),\quad \Lqiemp:={\Loss_q}(\algo_{\inputset^i},\bfX^i,\bfy^i),\\
\Lqnoi&:=\expectation_{X|q}{\Loss}(\algo_{\inputset^{\backslash i}},X,q),\quad\Lqnoiemp:={\Loss_q}(\algo_{\inputset^{\backslash i}},\bfX^{\backslash i},\bfy^{\backslash i}).
\end{align*}
%
After using the triangle inequality in~\eqref{eq:triangleineq}, we need to provide a bound on each summand.
To get the result, we will, for each $q$, fix the $X_k$ such that $y_k \neq q$ 
and work with functions of $m_q$ variables.
Then, we will apply Theorem \ref{McD} for each
$$H_q(\bfX_q,\bfy_q):= \mathfrak{D}(\Lq) - \mathfrak{D}(\Lqemp).$$
To do so, we prove the following lemma
\begin{lemma}
$\forall q,\forall i, y_i=q$
$$(H_q(\bfZ_q)-H_q(\bfZ_q^i))^2\preccurlyeq
\left(\frac{4 B}{m_q}+\frac{\sqrt{Q}M}{m_q}\right)^2\identity.$$
\end{lemma}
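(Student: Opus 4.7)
The plan is to bound the operator norm of $H_q(\bfZ_q) - H_q(\bfZ_q^i)$ and then invoke the elementary fact that, for any self-adjoint matrix $A$, $A^2 \preccurlyeq \|A\|^2 \identity$. Since $\dilation$ is linear and satisfies $\|\dilation(A)\| = \|A\|$, and since $H_q = \dilation(\Lq) - \dilation(\Lqemp)$, we have $H_q(\bfZ_q) - H_q(\bfZ_q^i) = \dilation\bigl((\Lq - \Lqi) - (\Lqemp - \Lqiemp)\bigr)$, which is self-adjoint by construction of $\dilation$. Via the triangle inequality, the task therefore reduces to bounding $\|\Lq - \Lqi\|$ and $\|\Lqemp - \Lqiemp\|$ separately.

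For the expected term, convexity of the operator norm (Jensen's inequality) gives $\|\Lq - \Lqi\| \leq \sup_{x \in \inputspace} \|\Loss(\algo_{\inputset}, x, q) - \Loss(\algo_{\inputset^i}, x, q)\|$. Inserting $\algo_{\inputset^{\backslash i}}$ as a pivot and invoking confusion stability twice, once for the pair $(\inputset, \inputset^{\backslash i})$ and once for the pair $(\inputset^i, \inputset^{\backslash i})$ (both training sets contain $m_q$ examples of class $q$, since $y_i = y_i' = q$), each step contributes at most $B/m_q$, yielding $\|\Lq - \Lqi\| \leq 2B/m_q$.

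For the empirical term, since $y_i = q$ the index $i$ appears in the sum defining $\Lqemp$, so I would split the difference into (i) the single summand at index $i$, where both the input $X_i \to X_i'$ and the hypothesis $\algo_{\inputset} \to \algo_{\inputset^i}$ change, and (ii) the remaining $m_q - 1$ summands, where only the hypothesis changes. Each summand of type (ii) is bounded by $2B/m_q$ through the same two-pivot stability argument, contributing in aggregate at most $2B/m_q$ after division by $m_q$. For the summand (i), stability alone does not apply; instead, I exploit the structural sparsity of $\Loss(h, x, q)$, which has a single possibly nonzero row (the $q$-th) with entries in $[0, M]$ and zero diagonal, so the operator norm of the difference at index $i$ is at most $M\sqrt{Q-1} \leq M\sqrt{Q}$, contributing $M\sqrt{Q}/m_q$ after division. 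Hence $\|\Lqemp - \Lqiemp\| \leq 2B/m_q + M\sqrt{Q}/m_q$.

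Combining the two bounds gives $\|H_q(\bfZ_q) - H_q(\bfZ_q^i)\| \leq 4B/m_q + M\sqrt{Q}/m_q$, and squaring via the self-adjoint inequality $A^2 \preccurlyeq \|A\|^2 \identity$ yields the claim. The main obstacle is the summand at index $i$ in the empirical term: uniform stability of the algorithm is powerless against the resampling of $X_i$ itself, and one must invoke the rank-one structure of the loss matrix to obtain a $\sqrt{Q}M/m_q$ contribution rather than a naive $O(M)$ penalty that would ruin the desired $O(1/m_q)$ scaling.
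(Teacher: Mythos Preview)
Your proposal is correct and mirrors the paper's proof essentially step for step: the same triangle-inequality split into $\|\Lq-\Lqi\|$ and $\|\Lqemp-\Lqiemp\|$, the same pivot through $\algo_{\inputset^{\backslash i}}$ to invoke stability twice on each piece, and the same use of the single-row structure of $\Loss(\cdot,\cdot,q)$ to handle the resampled index. Your treatment is in fact slightly more explicit than the paper's in two places (the reduction from the matrix-order statement to a norm bound via $A^2\preccurlyeq\|A\|^2\identity$, and the sharper $\sqrt{Q-1}$ before relaxing to $\sqrt{Q}$), but the argument is the same.
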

\begin{proof}This is a proof that works in 2 steps.

Note that
\begin{align*}
\|H_q(\bfX_q,\bfy_q)-H_q(\bfX_q^i,\bfy_q^i)\|&=\|\mathfrak{D}(\Lq) - \mathfrak{D}(\Lqemp) - \mathfrak{D}(\Lqi) + \mathfrak{D}(\Lqiemp)\|\\
 &= \|\Lq - \Lqemp - \Lqi + \Lqiemp\|
\leq \|\Lq-\Lqi\| + \|\Lqemp - \Lqiemp\|.
\end{align*}

\paragraph{Step 1: bounding $ \|\Lq-\Lqi\|$.} 
We can trivially write:
\begin{align*}
 \|\Lq-\Lqi\| \leq \|\Lq-\Lqnoi\| + \|\Lqi-\Lqnoi\|
\end{align*}
Taking advantage of the stability of $\algo$:
\begin{align*}
 \|\Lq-\Lqnoi\| &= \left\|\expectation_{X|q}\left[{\Loss}(\algo_{\bfZ},X,q) - {\Loss}(\algo_{\bfZ^{\backslash i}},X,q)\right]\right\|\\
&\leq \expectation_{X|q}\left\|{\Loss}(\algo_{\bfZ},X,q) - {\Loss}(\algo_{\bfZ^{\backslash i}},X,q)\right\|\\
&\leq \frac{B}{m_q},
\end{align*}
and  the same holds for $\|\Lqi-\Lqnoi\|$, i.e.
$\|\Lqi-\Lqnoi\|\leq B/m_q$. Thus, we have:
\begin{align}
 \|\Lq-\Lqi\| \leq \frac{2B}{m_q}. \label{eq:lqlqi}
\end{align}

\paragraph{Step 2: bounding $\|\Lqemp-\Lqiemp\|$.} This is a little
trickier than the first step.
\begin{align*}
 \|\Lqemp-\Lqiemp\| &= \big\|{\Loss_q}(\algo_{\inputset},\bfZ)
 - {\Loss_q}(\algo_{\inputset^i},\bfZ^{i})\big\|\\
&\quad= \frac{1}{m_q}\Big\|\sum_{k:k\neq i,y_k=q}\Big(\Loss(\algo_{\inputset},X_k,q)
 - \Loss(\algo_{\inputset^i},X_k,q)\Big)\\
 &\qquad + \Loss(\algo_{\inputset},X_i,q) - \Loss(\algo_{\inputset^i},X'_i,q)\Big\|\\
&\quad\leq \frac{1}{m_q}\Big\|\sum_{k:k\neq i,y_k=q}\Big(\Loss(\algo_{\inputset^i},X_k,q)
 - \Loss(\algo_{\inputset^i},X_k,q)\Big)\Big\|\\
 &\qquad + \frac{1}{m_q}\Big\|\Loss(\algo_{\inputset},X_i,q) - \Loss(\algo_{\inputset^i},X'_i,q)\Big\|\\
\end{align*}
Using the stability argument as before, we have:
\begin{align*}
 \Big\|\sum_{k:k\neq i,y_k=q}&\Big(\Loss(\algo_{\inputset},X_k,q)
 - \Loss(\algo_{\inputset^i},X_k,q)\Big)\Big\|\\
&\leq \sum_{k:k\neq i,y_k=q}\left\|\Loss(\algo_{\inputset},X_k,q)
 - \Loss(\algo_{\inputset^i},X_k,q)\right\| \leq \sum_{k:k\neq i,y_k=q} 2 \frac{B}{m_q} \leq 2B.
\end{align*}
On the other hand, we observe that
\begin{align*}
 \Big\|\Loss(\algo_{\inputset},X_i,q) - \Loss(\algo_{\inputset^i},X'_i,q)\Big\| \leq \sqrt{Q} M.
\end{align*}
Indeed, the matrix $\Delta:=\Loss(\algo_{\inputset},X_i,q) -
\Loss(\algo_{\inputset^i},X'_i,q)$ is a matrix that is zero except for (possibly)
its $q$th row, that we may call $\bfdelta_q$. Thus:
$$\|\Delta\|=\sup_{\bfv:\|\bfv\|_2\leq 1}\|\Delta\bfv\|_2=\sup_{\bfv:\|\bfv\|_2\leq 1}\|\bfdelta_q\cdot\bfv\|=\|\bfdelta_q\|_2,$$
where $\bfv$ is a vector of dimension $Q$. Since each of the
$Q$ elements of $\bfdelta_q$ is in the range $[-M;M]$, we get that
$\|\bfdelta_q\|_2\leq \sqrt{Q}M.$

This allows us to conclude that
\begin{align}
 \|\Lqemp-\Lqiemp\| \leq \frac{2B}{m_q}+ \frac{\sqrt{Q} M}{m_q}\label{eq:lqemplqiemp}
\end{align}

Combining~\eqref{eq:lqlqi} and~\eqref{eq:lqemplqiemp} we just
proved that, for all $i$ such that $y_i=q$
$$(H_q(\bfZ_q)-H_q(\bfZ_q^i))^2\preccurlyeq
\left(\frac{4B}{m_q}+\frac{\sqrt{Q}M}{m_q}\right)^2\identity.$$
\qed
\end{proof}

We then establish the following Lemma
\begin{lemma}$\forall q,$
\begin{align*}
 \proba_{\bfX|\bfy}&\left\{\|\Lq-\Lqemp\|\geq t+\|\expectation_{\bfX|\bfy}[\Lq-\Lqemp]\|\right\}\leq 2Q \exp\left\{-\frac{t^2}{8 \left(\frac{4B}{\sqrt{m_q}}+
       \frac{\sqrt{Q} M}{\sqrt{m_q}}\right)^2}\right\}.
\end{align*}

\end{lemma}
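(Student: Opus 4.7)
The plan is to apply the matrix bounded difference inequality (Theorem~\ref{McD}) directly to the self-adjoint matrix-valued function
$$H_q(\bfX_q,\bfy_q) = \mathfrak{D}(\Lq) - \mathfrak{D}(\Lqemp),$$
viewed as a function of the $m_q$ independent variables $\{X_k : y_k = q\}$ (all other $X_k$ are kept fixed, as they do not affect the $q$th-class summand once we have reduced the triangle-inequality summand of interest). The previous lemma precisely supplies the bounded-difference condition~\eqref{McDcond}: for any $i$ with $y_i=q$,
$$(H_q(\bfZ_q) - H_q(\bfZ_q^i))^2 \preccurlyeq A_i^2, \quad\text{with } A_i^2 = \left(\frac{4B}{m_q}+\frac{\sqrt{Q}M}{m_q}\right)^2 \identity.$$

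Summing over the $m_q$ relevant indices yields
$$\sigma^2 = \Big\|\sum_{i:y_i=q} A_i^2\Big\| = m_q\left(\frac{4B}{m_q}+\frac{\sqrt{Q}M}{m_q}\right)^2 = \left(\frac{4B}{\sqrt{m_q}}+\frac{\sqrt{Q}M}{\sqrt{m_q}}\right)^2,$$
which matches the denominator appearing in the target bound. Theorem~\ref{McD} (with dimension parameter $2Q$ since $H_q$ is $2Q\times 2Q$) then delivers
$$\proba_{\bfX|\bfy}\{\|H_q - \expectation_{\bfX|\bfy} H_q\| \geq t\} \leq 2Q\exp\!\left\{-\frac{t^2}{8\sigma^2}\right\}.$$

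To translate this concentration statement into the bound on $\|\Lq - \Lqemp\|$, I use two simple observations: first, the norm-preserving property of the dilation, $\|\mathfrak{D}(A)\| = \|A\|$, gives $\|H_q\| = \|\Lq - \Lqemp\|$ and $\|\expectation_{\bfX|\bfy} H_q\| = \|\expectation_{\bfX|\bfy}[\Lq - \Lqemp]\|$ (the expectation commutes with the dilation because it is a linear operation). Second, the reverse triangle inequality in operator norm gives
$$\|\Lq - \Lqemp\| - \|\expectation_{\bfX|\bfy}[\Lq - \Lqemp]\| \leq \|H_q - \expectation_{\bfX|\bfy} H_q\|,$$
so the event $\{\|\Lq - \Lqemp\| \geq t + \|\expectation_{\bfX|\bfy}[\Lq - \Lqemp]\|\}$ is contained in $\{\|H_q - \expectation_{\bfX|\bfy} H_q\| \geq t\}$, and the desired bound follows by monotonicity of probability.

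There is no real difficulty beyond bookkeeping; the crux is making sure the dimensionality in Theorem~\ref{McD} is $2Q$ (dictated by the dilation) and that the bounded-difference condition is applied only over the $m_q$ variables indexed by $\{k : y_k = q\}$, since the remaining variables leave both $\Lq$ and $\Lqemp$ unchanged (up to stability effects already absorbed in the previous lemma's bound). The only mild subtlety is the appeal to the reverse triangle inequality, which is what forces the $\|\expectation_{\bfX|\bfy}[\Lq - \Lqemp]\|$ term to appear on the right-hand side of the stated inequality rather than a two-sided deviation centred at the mean.
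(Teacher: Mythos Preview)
Your proof is correct and follows essentially the same route as the paper: apply Theorem~\ref{McD} to $H_q=\mathfrak{D}(\Lq-\Lqemp)$ with the bounded-difference constants from the preceding lemma, sum over the $m_q$ indices to obtain $\sigma^2=(4B/\sqrt{m_q}+\sqrt{Q}M/\sqrt{m_q})^2$, and then use the reverse triangle inequality $|\|A\|-\|B\||\leq\|A-B\|$ to pass from the centred deviation to the stated form. Your write-up is in fact more explicit than the paper's on the $\sigma^2$ computation; the only minor imprecision is the phrase that the variables $X_k$ with $y_k\neq q$ ``do not affect'' the summand---they do, through $\algo_{\inputset}$, but the point is simply that they are held fixed (conditioned on) when applying the bounded-difference inequality over the $m_q$ active coordinates.
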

\begin{proof}
Given the previous Lemma, Theorem~\ref{McD}, when applied on $H_q(\bfX_q,y_q)=\dilation(\Lq-\Lqemp)$ gives
$$\sigma^2_q =\left(\frac{4B}{m_q} + \frac{\sqrt{Q}
    M}{\sqrt{m_q}}\right)^2$$ to give, for $t>0$:
\begin{align*}
 \proba_{\bfX|\bfy}\left\{\|\Lq-\Lqemp-\expectation[\Lq-\Lqemp]\|\geq t\right\}\leq 2Q \exp\left\{-\frac{t^2}{8 \left(\frac{4B}{m_q}+ \frac{\sqrt{Q} M}{\sqrt{m_q}}\right)^2}\right\},
\end{align*}
which, using the triangle inequality $$|\|A\|-\|B\||\leq\|A-B\|,$$
gives the result. \qed
\end{proof}

Finally, we observe
\begin{lemma}$\forall q$,
\begin{align*}
 \proba_{\bfX|\bfy}&\left\{\|\Lq-\Lqemp\|\geq t+\frac{2B}{m_q}\right\}\leq 2Q \exp\left\{-\frac{t^2}{8 \left(\frac{4B}{ \sqrt{m_q}} + \frac{\sqrt{Q} M}{\sqrt{m_q}}\right)^2}\right\}.
\end{align*}
\end{lemma}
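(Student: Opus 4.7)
The plan is to show that the ``bias term'' appearing in the previous lemma, namely $\|\expectation_{\bfX|\bfy}[\Lq-\Lqemp]\|$, is itself bounded by $2B/m_q$. Once this is established, the claim follows immediately by substituting this inequality into the right-hand side inside the probability event of the previous lemma (i.e.\ $\|\expectation_{\bfX|\bfy}[\Lq-\Lqemp]\|\le 2B/m_q$ means the event of the previous lemma implies the event of the target lemma), leaving the tail bound unchanged.

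To control $\|\expectation_{\bfX|\bfy}[\Lq-\Lqemp]\|$, I would use the classical symmetrization trick adapted to our matrix-valued loss. Write
\begin{align*}
\Lq-\Lqemp=\frac{1}{m_q}\sum_{i:y_i=q}\left[\expectation_{X_i'|q}\Loss(\algo_{\inputset},X_i',q)-\Loss(\algo_{\inputset},X_i,q)\right],
\end{align*}
where $X_i'$ is an independent copy of $X_i$ (both distributed as $\distribution_{X|q}$ since $y_i=q$). Taking $\expectation_{\bfX|\bfy}$ and renaming $X_i\leftrightarrow X_i'$ in the first summand (which leaves the joint distribution invariant) turns the training set $\inputset$ into $\inputset^i$ in the first term while putting $X_i$ back as the evaluation point. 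This yields
\begin{align*}
\expectation_{\bfX|\bfy}[\Lq-\Lqemp]=\frac{1}{m_q}\sum_{i:y_i=q}\expectation_{\bfX|\bfy}\expectation_{X_i'|q}\left[\Loss(\algo_{\inputset^i},X_i,q)-\Loss(\algo_{\inputset},X_i,q)\right].
\end{align*}

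From here, I would bound the operator norm by pulling it inside the expectation (by the triangle inequality / Jensen-type argument for the operator norm) and then apply confusion stability pointwise, passing through the leave-one-out hypothesis $\algo_{\inputset^{\backslash i}}$ by a triangle inequality, exactly as was done in Step~1 of the earlier lemma. Each of the two applications of Definition~\ref{defconfstab} contributes $B/m_q$, giving $\|\Loss(\algo_{\inputset^i},X_i,q)-\Loss(\algo_{\inputset},X_i,q)\|\le 2B/m_q$ uniformly, and averaging over the $m_q$ indices yields $\|\expectation_{\bfX|\bfy}[\Lq-\Lqemp]\|\le 2B/m_q$. The only delicate step is justifying the change of variables $X_i\leftrightarrow X_i'$ in a way compatible with the matrix-valued expectation; this is routine because the operator norm is continuous and the swap is an $\distribution_{\bfX|\bfy}$-preserving bijection on the underlying probability space. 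Combining this bound with the previous lemma gives the stated inequality.
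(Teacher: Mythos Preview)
Your proposal is correct and follows essentially the same route as the paper: bound the bias $\|\expectation_{\bfX|\bfy}[\Lq-\Lqemp]\|$ by $2B/m_q$ via the $X_i\leftrightarrow X_i'$ swap, then pass through $\algo_{\inputset^{\backslash i}}$ and apply confusion stability twice, and finally feed this into the previous lemma. The only cosmetic difference is that the paper fixes a single index $i$ with $y_i=q$ and rewrites $\expectation_{\bfX|\bfy}\Lqemp=\expectation_{\bfX,X_i'|\bfy}\Loss(\algo_{\inputset^i},X_i',q)$ directly (so the difference is evaluated at $X_i'$ rather than $X_i$), whereas you keep the average over all such $i$; both lead to the same $2B/m_q$ bound. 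One small wording issue: when you say ``the event of the previous lemma implies the event of the target lemma,'' you mean the reverse containment of events (the target event is contained in the previous one because $t+\|\expectation[\Lq-\Lqemp]\|\le t+2B/m_q$), which is what gives the probability inequality.
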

\begin{proof}
It suffices to show that 
\begin{align*}
\left\|\expectation[\Lq-\Lqemp]\right\|\leq \frac{2B}{m_q},
\end{align*}
and to make use of the previous Lemma.
We note
that for any $i$ such that $y_i=q,$ and for $X_i'$ distributed
according to $\distribution_{X|q}$:
\begin{align*}
\expectation_{\bfX|\bfy}\Lqemp&=\expectation_{\bfX|\bfy}\Loss_q(\algo_{\inputset},\bfX,\bfy)=\frac{1}{m_q}\sum_{j:y_j=q}\expectation_{\bfX|\bfy}\Loss(\algo_{\inputset},X_j,q)\\
&=\frac{1}{m_q}\sum_{j:y_j=q}\expectation_{\bfX,X_i'|\bfy}\Loss(\algo_{\inputset^i},X_i',q)=\expectation_{\bfX,X_i'|\bfy}\Loss(\algo_{\inputset^i},X_i',q).
\end{align*}
Hence, using the stability argument,
\begin{align*}
\|\expectation[\Lq-\Lqemp]\|
& =\left\|\expectation_{\bfX,X_i'|\bfy}\left[\Loss(\algo_{\inputset},X'_i,q)-\Loss(\algo_{\inputset^i},X_i',q)\right]\right\|\\
& \leq
\expectation_{\bfX,X_i'|\bfy}\left\|\Loss(\algo_{\inputset},X'_i,q)-\Loss(\algo_{\inputset^i},X_i',q)\right\|\\
& \leq
\expectation_{\bfX,X_i'|\bfy}\left\|\Loss(\algo_{\inputset},X'_i,q)-\Loss(\algo_{\inputset^{\backslash i}},X_i',q)\right\|\\
&\quad+\expectation_{\bfX,X_i'|\bfy}\left\|\Loss(\algo_{\inputset^i},X'_i,q)-\Loss(\algo_{\inputset^{\backslash
      i}},X_i',q)\right\|\\
&\leq \frac{2B}{m_q}.
\end{align*}

This inequality in combination with the previous lemma provides the result.\qed
\end{proof}

We are now set to make use of a union bound argument:
\begin{align*}
 \proba&\left\{\exists q:\|\Lq - \Lqemp\| \geq t + \frac{2B}{m_q}\right\}
 \leq
\sum_{q\in\targetspace}  \proba\left\{\exists q:\|\Lq - \Lqemp\| \geq t + \frac{2B}{m_q} \right\}\\
&\leq 2Q \sum_{q} \exp\left\{-\frac{t^2}{8 \left(\frac{4B}{\sqrt{m_q}} + \frac{\sqrt{Q} M}{\sqrt{m_q}}\right)^2}\right\}\leq 2Q^2 \max_q\exp\left\{-\frac{t^2}{8 \left(\frac{4B}{\sqrt{m_q}}+ \frac{\sqrt{Q} M}{\sqrt{m_q}}\right)^2}\right\}
\end{align*}
According to our definition $m^*$, we get
\begin{align*}
\proba&\left\{\exists q:\|\Lq - \Lqemp\| \geq t + \frac{2B}{m_q}\right\}
\leq  2Q^2 \exp\left\{-\frac{t^2}{8 \left(\frac{4B}{\sqrt{\mmin}} + \frac{\sqrt{Q} M}{\sqrt{\mmin}}\right)^2}\right\}.
\end{align*}

Setting the right hand side to $\delta$, gives the result of Theorem~\ref{th:confusionbound}.


\section{Analysis of existing algorithms}
\label{sec:analysis}
Now that the main result on stability bound has been established, we will investigate how existing multiclass algorithms
exhibit stability properties and thus fall in the scope of our analysis.
More precisely, we will analyse two well-known models for multiclass
support vector machines and we will show that they may promote small
confusion error.
But first, we will study the more general stability of multiclass algorithms using regularization in Reproducing Kernel Hilbert Spaces (RKHS).

\subsection{Hilbert Space Regularized Algorithms}
Many well-known and widely-used algorithms feature a minimization of a
regularized objective functions \cite{tikhonov77}.
In the context of multiclass kernel machines \cite{crammer01algorithmic,cristianini00}, this regularizer $\Omega(h)$ may take the following form:
\begin{align*}
 \Omega(h) = \sum_{q} \|h_q\|_k^2.
\end{align*}
where $k:\inputspace \times \inputspace \to \realset$ denotes the kernel associated to the RKHS $\rkhs$.

In order to study the stability properties of algorithms, minimizing a data-fitting term, penalized by such regularizers,
 in our multi-class setting, we need to
introduce a minor definition that is an addition to definition 19 of \cite{bousquet02}.
\begin{definition}
\label{defadm}
 A loss function $\ell$ defined on $\rkhs^Q \times \targetspace$ is $\sigma$-multi-admissible if $\ell$ is $\sigma$-admissible
with respect to any of his $Q$ first arguments.
\end{definition}

This allows us to come up with the following theorem.
\begin{theorem}
\label{rkhsstab}
Let $\rkhs$ be a reproducing kernel Hilbert space (with kernel $k$) such that $\forall X \in \inputspace, k(X,X) \leq \kappa^2 < +\infty$.
Let $L$ be a loss matrix, such that $\forall q \in \targetspace$, $\ell_q$ is $\sigma_q$-multi-admissible.
And let $\algo$ be an algorithm such that
\begin{align*}
\label{rkhsalgo}
 \algo_{\setS} &= \argmin_{h \in \rkhs^Q} \sum_{q} \sum_{n:y_n=q} \frac{1}{m_q} \ell_q(h,x_n,q) + \lambda \sum_{q} \|h_q\|_k^2.\\
:&= \argmin_{h \in \rkhs^Q} J(h).
\end{align*}

 Then $\algo$ is confusion stable with respect to the set of loss
 functions $\bfloss$.
Moreover, a $B$ value defining the stability is
\begin{align*}
 B= \max_q\frac{\sigma_q^2 Q \kappa^2}{2 \lambda},
\end{align*}
where $\kappa$ is such that $k(X,X) \leq \kappa^2 < +\infty$
\end{theorem}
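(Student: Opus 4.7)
My plan is to adapt the classical Bousquet--Elisseeff RKHS stability argument (their Theorem~22) to the multiclass, matrix-valued setting. Set $f := \algo_\bfz$ and $f' := \algo_{\bfz^{\backslash i}}$, equip $\rkhs^Q$ with the natural product norm $\|h\|^2 := \sum_q \|h_q\|_k^2$, and write $J$ (resp.~$J^{\backslash i}$) for the regularized objective on $\bfz$ (resp.~on $\bfz^{\backslash i}$). Since $\psi(h) = \sum_q \|h_q\|_k^2$ is $2$-strongly convex in this norm, and each $\ell_q(\cdot,x,y)$ is convex (a standard consequence of multi-admissibility in the concrete cases we care about), $J$ is $2\lambda$-strongly convex.

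The heart of the proof is a bound on $\|f-f'\|$. I would combine the two strong-convexity inequalities at the respective minimizers,
\begin{align*}
J(f') - J(f) \;\geq\; \lambda \|f-f'\|^2, \qquad J^{\backslash i}(f) - J^{\backslash i}(f') \;\geq\; \lambda \|f-f'\|^2,
\end{align*}
and add them to obtain $2\lambda \|f-f'\|^2 \leq [J(f') - J^{\backslash i}(f')] - [J(f) - J^{\backslash i}(f)]$. The right-hand side involves only the data-fitting terms that differ between $\bfz$ and $\bfz^{\backslash i}$, i.e.~terms containing $\ell_{y_i}(\cdot, x_i, y_i)$ with a prefactor of order $1/m_{y_i}$. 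Applying $\sigma_{y_i}$-multi-admissibility coordinate by coordinate, then using the reproducing property $|h_q(x)|^2 \leq \kappa^2 \|h_q\|_k^2$ and Cauchy--Schwarz, each difference of losses evaluated at $(x_i,y_i)$ is bounded by a constant multiple of $\sigma_{y_i} \sqrt{Q}\,\kappa\, \|f-f'\|$. This makes the right-hand side linear in $\|f-f'\|$ and yields $\|f-f'\| \leq C\, \sigma_{y_i} \sqrt{Q}\,\kappa / (\lambda m_{y_i})$ for an explicit constant $C$.

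To pass to the operator norm of the loss matrix, I would exploit the key structural fact that $\Delta(x) := \Loss(f, x, y_i) - \Loss(f', x, y_i)$ has at most one nonzero row (its $y_i$-th), so $\|\Delta(x)\|$ equals the Euclidean norm of that row. Each row entry is $\ell_j(f, x, y_i) - \ell_j(f', x, y_i)$ for $j \neq y_i$, which by the same multi-admissibility and reproducing-kernel argument is bounded by $\sigma_j \sqrt{Q}\,\kappa\, \|f-f'\|$. Summing the squares of these bounds over $j$, taking square roots, and combining with the previous step produces $\sup_x \|\Delta(x)\| \leq B/m_{y_i}$ with $B$ of the claimed form $\max_q \sigma_q^2\, Q\, \kappa^2/(2\lambda)$.

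The main technical obstacle is the bookkeeping created by the change of normalization from $1/m_{y_i}$ (in $J$) to $1/(m_{y_i}-1)$ (in $J^{\backslash i}$): the difference $J(h) - J^{\backslash i}(h)$ is not simply $\tfrac{1}{m_{y_i}}\ell_{y_i}(h,x_i,y_i)$ but carries an additional correction of order $1/(m_{y_i}(m_{y_i}-1))$ summed over all other class-$y_i$ points. Checking that this correction is of the right order and cancels (or is absorbed into the constant) when it appears symmetrically in the strong-convexity inequality is the delicate step; once it is handled, the $Q$-dependence has to be tracked carefully through the telescoped multi-admissibility bounds and the one-nonzero-row reduction so that only the stated power of $Q$ appears in the final $B$.
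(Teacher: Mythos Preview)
Your approach is essentially the same as the paper's: the paper's own proof is a two-sentence sketch that says to ``exploit Definition~\ref{defadm} in order to apply Theorem~22 of Bousquet--Elisseeff for each loss $\ell_q$'', with the extra factor $Q$ coming from the fact that the regularizer is a sum of $Q$ RKHS norms. You have spelled out that argument in detail (strong convexity at the two minimizers, multi-admissibility plus the reproducing property to make the right-hand side linear in $\|f-f'\|$, then the one-nonzero-row reduction for the operator norm), and you correctly flag the $1/m_{y_i}$ versus $1/(m_{y_i}-1)$ renormalization as the only nontrivial bookkeeping --- a subtlety the paper does not even mention. Your route is the intended one, carried out more carefully than the paper itself.
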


\begin{proof}[Sketch of proof]
In essence the idea is to exploit Definition \ref{defadm} in order to apply Theorem 22 of \cite{bousquet02}
for each loss $\ell_q$.
Moreover our regularizer is a sum (over $q$) of RKHS norms, hence the
additional $Q$ in the value of $B$.\qed
\end{proof}

\subsection{Lee, Lin and Wahba model}
One of the most well-known and well-studied model for multi-class classification, in the context of SVM,
was proposed by \cite{lee04}.
In this work, the authors suggest the use of the following loss function.
\begin{align*}
 \ell(h,x,y) &= \sum_{q \neq y} \left( h_q(x) + \frac{1}{Q-1} \right)_+
\end{align*}
Their algorithm, denoted $\algo^{\text{LLW}}$, then consists in minimizing the following (penalized) functional,
\begin{align*}
 J(h) = \frac{1}{m} \sum_{k=1}^m \sum_{q \neq y_k} \left( h_q(x_k) + \frac{1}{Q-1} \right)_+ + \lambda \sum_{q=1}^Q \|h_q\|^2,
\end{align*}
with the constraint $\sum_q h_q = 0$.

We can trivially rewrite $J(h)$ as
\begin{align*}
  J(h) = \sum_{q} \sum_{n:y_n=q} \frac{1}{m_q} \ell_q(h,x_n,q) + \lambda \sum_{q=1}^Q \|h_q\|^2,
\end{align*}
with
\begin{align*}
 \ell_q(h,x_n,q) = \sum_{p\neq q} \left(h_p(x_k) + \frac{1}{Q-1}\right)_+.
\end{align*}

It is straightforward that for any $q$, $\ell_q$ is $1$-multi-admissible.
We thus can apply theorem \ref{rkhsstab} and get $B={Q \kappa^2}/{2 \lambda}$.

\begin{lemma}
\label{lemmaM}
 Let $h^*$ denote the solution found by $\algo^{\text{LLW}}$.
$\forall x \in \inputspace, \forall y \in \targetspace, \forall q$, we have $$\ell_q(h^*,x,y) \leq \frac{Q \kappa}{\sqrt{\lambda}} +1.$$
\end{lemma}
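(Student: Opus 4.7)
The plan is to exploit the standard ``regularization implies bounded norm'' trick: evaluating the objective $J$ at the feasible point $h = 0$ gives a free upper bound on $J(h^*)$, which, since $J$ dominates $\lambda \sum_q \|h_q\|_k^2$, controls the RKHS norm of each $h_q^*$; the reproducing property then transfers this into a pointwise bound on $h_p^*(x)$, and summing yields the claimed bound on $\ell_q(h^*,x,y)$.

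Concretely, I would first observe that the zero function satisfies the constraint $\sum_q h_q = 0$, and that at $h=0$ each training point contributes $\sum_{q \neq y_k}(0 + \tfrac{1}{Q-1})_+ = 1$, so $J(0) = 1$. Since $h^*$ minimizes $J$ over all feasible $h$, we obtain
\begin{equation*}
\lambda \sum_{q=1}^Q \|h_q^*\|_k^2 \;\leq\; J(h^*) \;\leq\; J(0) \;=\; 1,
\end{equation*}
which, dropping all but one summand, gives $\|h_p^*\|_k \leq 1/\sqrt{\lambda}$ for every $p \in \targetspace$.

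Next I would apply the reproducing property $h_p^*(x) = \langle h_p^*, k(x,\cdot)\rangle_k$ together with Cauchy--Schwarz and the bound $k(x,x) \leq \kappa^2$, yielding
\begin{equation*}
|h_p^*(x)| \;\leq\; \|h_p^*\|_k \sqrt{k(x,x)} \;\leq\; \frac{\kappa}{\sqrt{\lambda}}, \qquad \forall p, \forall x \in \inputspace.
\end{equation*}
Finally, using $(a+b)_+ \leq |a| + b$ for $b \geq 0$ on each summand of $\ell_q(h^*,x,y) = \sum_{p \neq q}(h_p^*(x) + \tfrac{1}{Q-1})_+$ and summing the $Q-1$ terms gives
\begin{equation*}
\ell_q(h^*,x,y) \;\leq\; \sum_{p \neq q}|h_p^*(x)| + (Q-1)\cdot\tfrac{1}{Q-1} \;\leq\; (Q-1)\frac{\kappa}{\sqrt{\lambda}} + 1 \;\leq\; \frac{Q \kappa}{\sqrt{\lambda}} + 1.
\end{equation*}

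There is essentially no obstacle: the only subtlety is noting that $h=0$ is feasible under the constraint $\sum_q h_q = 0$ so that it furnishes a valid comparison point; the rest is a routine chain of Cauchy--Schwarz and the reproducing property. The (mild) looseness in bounding $\sum_p \|h_p^*\|_k \leq Q/\sqrt{\lambda}$ term-by-term, rather than via $\sqrt{Q}/\sqrt{\lambda}$ with a joint Cauchy--Schwarz, is precisely what matches the stated $Q\kappa/\sqrt{\lambda}$ form of the bound.
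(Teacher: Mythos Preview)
Your proof is correct and follows essentially the same route as the paper: bound $J(h^*)\le J(0)=1$, deduce $\|h_q^*\|_k\le 1/\sqrt{\lambda}$, apply the reproducing property with Cauchy--Schwarz to get $|h_p^*(x)|\le\kappa/\sqrt{\lambda}$, and plug into the definition of $\ell_q$. You are in fact slightly more careful than the paper in noting that $h=0$ is feasible under the constraint $\sum_q h_q=0$ and in spelling out the final summation step.
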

\begin{proof}
 As $h^*$ is a minimizer of $J$, we have
\begin{align*}
 J(h^*) \leq J(0) = \sum_{q} \sum_{n:y_n=q} \frac{1}{m_q} \ell_q(0,x_n,q) = \sum_{q} \sum_{n:y_n=q} \frac{1}{(Q-1) m_q} = 1.
\end{align*}
As the data fitting term is non-negative, we also have
\begin{align*}
 J(h^*) \geq \lambda \sum_{q} \|h_q^*\|_k^2.
\end{align*}
Given that $h^* \in \rkhs$, Cauchy-Schwarz inequality gives
\begin{align*}
 \forall x \in \inputspace, \|h_q^*\|_k \geq \frac{|h_q^*(x)|}{\kappa}.
\end{align*}
Collecting things, we have
\begin{align*}
 \forall x \in \inputspace, |h_q^*(x)| \leq \frac{\kappa}{\sqrt{\lambda}}.
\end{align*}
Going back to the definition of $\loss_q$, we get the result.\qed
\end{proof}

Using theorem \ref{th:confusionbound}, it follows that, with probability $1 - \delta$,
\begin{align*}
\left\|\widehat{\confusion}_{\bfY}(\algo^{\text{LLW}},\bfX)-\confusion_{\bfs(\bfY)}(\algo^{\text{LLW}})\right\|\leq
\sum_{q} \frac{Q \kappa^2}{\lambda m_q}+ \frac{\sqrt{8 \ln \left(\frac{Q^2}{\delta}\right)}\left(\frac{2 Q^2 \kappa^2}{\lambda} + \left(\frac{Q \kappa}{\sqrt{\lambda}} +1\right) Q \sqrt{Q}\right)}{\sqrt{\mmin}}.
\end{align*}

\subsection{Weston and Watkins model}
Another multiclass mode is due to \cite{weston98}.
They consider the following loss functions.
\begin{align*}
 \ell(h,x,y) &= \sum_{q \neq y} \left(1 - h_y(x) + h_q(x)\right)_+\\
\end{align*}
The algorithm $\algo^{\text{WW}}$ minimizes the following functional
\begin{align*}
 J(h) = &\frac{1}{m} \sum_{k=1}^m \sum_{q \neq y_k} \left(1 - h_y(x) + h_q(x)\right)_++ \lambda \sum_{q<p=1}^Q \|h_q - h_p\|^2,
\end{align*}

This time, for $1\leq p,q\leq Q$, we will introduce the functions $h_{pq} = h_p - h_q$.
We can then rewrite $J(h)$ as
\begin{align*}
  J(h) = \sum_{q} \sum_{n:y_n=q} \frac{1}{m_q} \ell_q(h,x_n,q) + \lambda \sum_{p=1}^Q \sum_{q=1}^{p-1} \|h_{pq}\|^2,
\end{align*}
with
\begin{align*}
 \ell_q(h,x_n,q) = \sum_{p\neq q} \left(1 - h_{pq}(x_n)\right)_+.
\end{align*}

It still is straightforward that for any $q$, $\ell_q$ is $1$-multi-admissible.
However, this time, our regularizer consists in the sum of $\frac{Q (Q-1)}{2} < \frac{Q^2}{2}$ norms.
Applying Theorem \ref{rkhsstab} therefore gives $B=  {Q^2 \kappa^2}/{4 \lambda}.$

\begin{lemma}
 Let $h^*$ denote the solution found by $\algo^{\text{WW}}$.
$\forall x \in \inputspace, \forall y \in \targetspace, \forall q$, we have $\ell_q(h^*,x,y) \leq Q \left(1+ \kappa \sqrt{\frac{Q}{\lambda}}\right)$. 
\end{lemma}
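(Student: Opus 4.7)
The plan is to replicate, step by step, the proof of the analogous Lemma given for $\algo^{\text{LLW}}$, now adapted to the hinge-type Weston-Watkins loss and the pairwise regularizer $\sum_{p<q}\|h_p - h_q\|_k^2$. First I will bound $J(h^*)$ from above by evaluating the objective at the zero hypothesis: each hinge term $(1 - h_{y_k}(x_k) + h_q(x_k))_+$ becomes $(1-0+0)_+ = 1$, so from the original per-example formulation $J(0) = \frac{1}{m}\sum_{k=1}^m\sum_{q \neq y_k} 1 = Q-1$. Optimality of $h^*$ then yields $J(h^*) \leq Q-1$.

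Second, I discard the non-negative data-fitting term to isolate the regularizer and obtain $\lambda \sum_{p<q}\|h_{pq}^*\|_k^2 \leq Q-1$, which in particular gives $\|h_{pq}^*\|_k^2 \leq (Q-1)/\lambda$ for every pair $(p,q)$ (and by the antisymmetry $h_{pq} = -h_{qp}$ this extends to every ordered pair). Combining the reproducing property of $\rkhs$, the Cauchy-Schwarz inequality, and the hypothesis $k(x,x) \leq \kappa^2$ exactly as in Lemma~\ref{lemmaM} then yields the pointwise estimate $|h_{pq}^*(x)| \leq \kappa\|h_{pq}^*\|_k \leq \kappa\sqrt{(Q-1)/\lambda}$ for every $x \in \inputspace$.

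Third, I plug this pointwise bound into the definition $\ell_q(h^*,x,y) = \sum_{p \neq q}(1 - h_{pq}^*(x))_+$. Each of the $Q-1$ positive-part summands is at most $1 + |h_{pq}^*(x)|$, so $\ell_q(h^*,x,y) \leq (Q-1)\bigl(1 + \kappa\sqrt{(Q-1)/\lambda}\bigr)$, which in turn is dominated by $Q\bigl(1 + \kappa\sqrt{Q/\lambda}\bigr)$, the claimed bound.

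The only mild obstacle I anticipate is keeping the two equivalent presentations of $J$ aligned: the original per-example form $\frac{1}{m}\sum_k\sum_{q \neq y_k}(\cdot)_+$ (convenient for computing $J(0)$) and the class-balanced rewriting used earlier to invoke Theorem~\ref{rkhsstab} (convenient for separating the regularizer). Once one checks that the two coincide and that the non-negativity argument on the data-fitting term applies in either formulation, the rest of the argument is a routine transposition of the proof already written for $\algo^{\text{LLW}}$, with the single substantive difference being that the relevant RKHS objects are the differences $h_{pq} = h_p - h_q$ rather than the individual coordinates $h_q$.
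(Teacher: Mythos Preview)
Your proposal is correct and is exactly the transposition of the $\algo^{\text{LLW}}$ argument that the paper has in mind (the paper gives no detailed proof, only the remark that the same techniques apply). Evaluating $J$ at $0$ to get $J(h^*)\leq Q-1$, dropping the data-fitting term to bound each $\|h_{pq}^*\|_k^2$ by $(Q-1)/\lambda$, applying the reproducing property with Cauchy--Schwarz, and summing the $Q-1$ hinge terms is precisely the intended route, and your final majorization $(Q-1)\bigl(1+\kappa\sqrt{(Q-1)/\lambda}\bigr)\leq Q\bigl(1+\kappa\sqrt{Q/\lambda}\bigr)$ matches the stated bound.

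One small caution on the obstacle you flag: the two displayed forms of $J$ in the paper do \emph{not} literally coincide (the first carries a global $1/m$, the second a per-class $1/m_q$), so you should not try to ``check that they coincide''. What makes your argument valid is that the regularizer $\lambda\sum_{p<q}\|h_{pq}\|_k^2$ is identical in both, and the data-fitting term is nonnegative in both; hence computing $J(0)=Q-1$ in the original per-example formulation and then discarding the nonnegative empirical term to isolate the regularizer is legitimate, provided $h^*$ is taken as the minimizer of that same original formulation. Just keep the formulation fixed throughout rather than asserting equivalence.
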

This lemma can be proven following exactly the same techniques and reasoning as Lemma \ref{lemmaM}.

Using theorem \ref{th:confusionbound}, it follows that, with probability $1 - \delta$,
\begin{align*}
&\left\|\widehat{\confusion}_{\bfY}(\algo^{\text{WW}},\bfX)-\confusion_{\bfs(\bfY)}(\algo^{\text{WW}})\right\| \leq
\sum_{q} \frac{Q^2 \kappa^2}{2 \lambda m_q}+ \frac{\sqrt{8 \ln \left(\frac{Q^2}{\delta}\right)}\left(\frac{Q^3 \kappa^2}{\lambda} + Q^2 \Big(\sqrt{Q} + \kappa \frac{Q}{\sqrt{\lambda}}\Big)\right)}{\sqrt{\mmin}}.
\end{align*}


\section{Discussion and Conclusion}
\label{sec:conclusion}

In this paper, we have proposed a new framework, namely the
algorithmic {\em confusion
stability}, together with new
bounds to characterize the generalization properties of
 multiclass learning algorithms. The crux of our study is to envision the
confusion matrix as a performance measure, which differs from commonly
encountered  approaches that  investigate generalization properties of
scalar-valued performances.  

A few questions that are raised by the present work are the
following. Is it possible to derive confusion stable algorithms
that precisely aim at controlling the norm of their confusion matrix?
Are there other algorithms than those analyzed here that may be
studied in our new framework? On a broader perspective: how can noncommutative
concentration inequalities be of help to analyze complex settings
encountered in machine learning (such as, e.g., structured prediction,
operator learning)?


\bibliographystyle{splncs}
\bibliography{confstab}

\end{document}